\documentclass{article}


\usepackage[preprint]{neurips_2024}
    \PassOptionsToPackage{numbers, compress}{natbib}



\usepackage[utf8]{inputenc} 
\usepackage[T1]{fontenc}    
\usepackage{hyperref}       
\usepackage{url}            
\usepackage{booktabs}       
\usepackage{amsfonts}       
\usepackage{nicefrac}       
\usepackage{microtype}      
\usepackage{xcolor}         
\usepackage{makecell}


\usepackage{graphicx}
\usepackage{amsmath}
\usepackage{mathtools}
\usepackage{multirow}
\usepackage{amsthm}
\usepackage{colortbl}
\usepackage{wrapfig}
\newtheorem{theorem}{Theorem}
\DeclareMathOperator*{\argmin}{arg\,min}
\DeclareUnicodeCharacter{221A}{\ensuremath{\sqrt{}}}
\DeclareMathOperator{\diag}{diag}

\usepackage{listings}
\usepackage{xcolor}

\definecolor{codegray}{gray}{0.95}
\definecolor{pykeyword}{rgb}{0.0,0.0,0.6}   
\definecolor{pystring}{rgb}{0.2,0.5,0.2}    
\definecolor{pycomment}{rgb}{0.5,0.5,0.5}   
\definecolor{pyfunc}{rgb}{0.55,0.0,0.55}    

\lstset{
    language=Python,
    backgroundcolor=\color{codegray},
    basicstyle=\ttfamily\small,
    keywordstyle=\color{pykeyword}\bfseries,
    stringstyle=\color{pystring},
    commentstyle=\color{pycomment}\itshape,
    frame=single,
    breaklines=true,
    showstringspaces=false,
    emph={cp,sum_squares,CvxpyLayer,Minimize,Problem},     
    emphstyle=\color{pyfunc}\bfseries
}

\title{Differentiable Convex Optimization Layers in Neural Architectures: Foundations and Perspectives}
\raggedbottom

\makeatletter
\renewcommand{\@noticestring}{}
\makeatother
\author{%
  Calder K. Katyal\\
  Undergraduate\\
  Yale University\\
  New Haven, CT 06511 \\
  \texttt{calder.katyal@yale.edu} \\
}
\typeout{get arXiv to do 3 passes: Label(s) may have changed. Rerun to get cross-references right.}

\begin{document}

\maketitle

\begin{abstract}
    The integration of optimization problems within neural network architectures represents a fundamental shift from traditional approaches to handling constraints in deep learning. While it is long known that neural networks can incorporate soft constraints with techniques such as regularization, strict adherence to hard constraints is generally more difficult. A recent advance in this field, however, has addressed this problem by enabling the direct embedding of optimization layers as differentiable components within deep networks. This paper surveys the evolution and current state of this approach, from early implementations limited to quadratic programming, to more recent frameworks supporting general convex optimization problems. We provide a comprehensive review of the background, theoretical foundations, and emerging applications of this technology. Our analysis includes detailed mathematical proofs and an examination of various use cases that demonstrate the potential of this hybrid approach. This work synthesizes developments at the intersection of optimization theory and deep learning, offering insights into both current capabilities and future research directions in this rapidly evolving field. 
\end{abstract}
\section{Background}
\label{sec:intro}
Imagine we want to train a model to solve a structured task given a set of input and output sequences, without explicit knowledge of the constraints or rules involved. A typical approach might be to use a feedforward network, such as a Convolutional Neural Network (CNN), to learn patterns from the data. To attempt to learn and conform with the rules, we might try techniques such as regularization or a robust loss function that penalizes rulebreaking. Nevertheless, without a perfectly tuned reward signal from the loss function, the model will tolerate invalid solutions, leading to overfitting as the model may latch onto irrelevant patterns that don’t generalize. If we could incorporate a learnable component into the model that intuitively learns the rules of the task and ensures the output conforms with them, we would expect improved generalization, as the constraints would be independent of specific input distributions and become globally optimized over the training process. However, there is no general mechanism for incorporating complex, learnable hard constraints in a vanilla feedforward network. Only simple constraints—such as the simplex constraint in the softmax layer—are easily enforced. To address this limitation, \citet{optnet} proposes a foundational method of embedding these constrained optimization problems into the neural architecture as distinct layers. In this layer, the network can output the solution to the constrained optimization problems as part of its forward pass, where the constraints are parametrized by the previous layers; furthermore, it can propagate gradient information through these constraints in the backward pass to better understand the nature of the task. This allows the architecture to enforce either explicitly defined or implicitly learned hard constraints within an end-to-end differentiable framework. 

In this survey, we focus on the development of convex optimization layers as a means to enforce hard constraints within neural network architectures, a capability that has traditionally been challenging to achieve. Starting with the pioneering work of \citet{optnet}, which introduced the embedding of quadratic programming layers, we trace the progression of this idea into more general convex optimization frameworks as in \citet{differentiableconvexoptimizationlayers}. We examine the mathematical principles that enable these layers to remain differentiable, the computational methods required for their efficient implementation, and the integration of these methods into existing neural network workflows. By exploring both theoretical advancements and practical applications, this survey aims to provide a deeper understanding of how convex optimization layers enhance the capacity of neural networks to solve structured problems while adhering to complex constraints.

The methods employed in these convex optimization layers are extensions of long-standing theory in the subject of convex optimization, the full treatment of which can be found in \citet{boyd2004convex}. The key to these models is to formulate a learnable optimization problem using the networks parameters and output a solution which can be fed into subsequent layers. The first technique developed for this was an adaptation of \textit{argmin differentiation} in which implicit differentiation is used to obtain the gradients from the Karush-Kuhn-Tucker (KKT) matrix associated with the problem. Subsequent works such as \citet{differentiableconvexoptimizationlayers} transform the problem into a cone problem and use the recently developed technique in \citet{conedifferentiation} to differentiate through the solution map of the cone program. Of course, as QPs are a subset of cone programs, this method is immediately transferable to the optimization layers in \citet{optnet}.

\subsection{Convex Optimization}

A convex optimization layer simply takes the form of a parameterized convex problem, where the output of the layer is the solution \(x^\star\) obtained by minimizing the objective function subject to the constraints.

In particular, we define a (parametrized) convex problem as

\begin{equation}
\begin{aligned}
    \text{minimize} \quad & f_0(x; \theta) \\
    \text{subject to} \quad & f_i(x; \theta) \leq 0, \quad i = 1, \ldots, m, \\
    & a_i^T x = b_i, \quad i = 1, \ldots, p,
\end{aligned}
\end{equation}

where $f_0(x; \theta)$ and $f_i(x; \theta)$, $i = 1, \ldots, m$, are convex functions, and $a_i^T x - b_i$ are affine equality constraints \citet{boyd2004convex}. Here, $x \in \mathbb{R}^n$ is the optimization variable, $\theta \in \mathbb{R}^p$ is a parameter vector, and $a_i \in \mathbb{R}^n$, $b_i \in \mathbb{R}$. The feasible set of this problem is given by \[\mathcal{D} = \bigcap_{i=0}^m \text{dom}(f_i) \cap \bigcap_{i=1}^m \{x \mid f_i(x) \leq 0\} \cap \bigcap_{i=1}^p \{x \mid a_i^T x = b_i\}.
\] and is convex as it is the intersection of convex sets. A convex optimization layer aims to optimize the parameters \(\theta\) such that the solution \(x^\star\), which is the optimal value determined by solving the convex optimization problem parameterized by \(\theta\), minimizes a scalar loss function \(L(x^\star; \theta)\). This ensures that \(\theta\) is learned in a way that aligns the solution \(x^\star\) with the overall task objectives encoded in \(L\), enabling the optimization layer to contribute effectively to the network. Before such a general formulation was considered, the original set of optimization problems as in \citet{optnet} were restricted to convex quadratic programs (QPs) of the form

\begin{equation}
\begin{aligned}
    \text{minimize} \quad & \frac{1}{2} x^T P x + q^T x + r \\
    \text{subject to} \quad & Gx \preceq h, \\
                             & Ax = b,
\end{aligned}
\end{equation}

where \(x \in \mathbb{R}^n\) is the optimization variable, \(P \in \mathbb{S}_+^n\) is a positive semidefinite matrix, \(q \in \mathbb{R}^n\), \(r \in \mathbb{R}\), \(G \in \mathbb{R}^{m \times n}\), \(h \in \mathbb{R}^m\), \(A \in \mathbb{R}^{p \times n}\), and \(b \in \mathbb{R}\), forming a polyhedral feasible set. 

\subsection{Argmin Differentiation}

The method of implicit differentiation used in \citet{optnet} builds off a technique called \textbf{argmin differentiation} that has been previously applied to differentiate through particular optimization problems. The first major treatment of argmin differentiation in a general setting is found in \citet{gould2016differentiating}. While this approach provides theoretical groundwork for differentiating through optimization layers, it treats equality and inequality constraints separately, rather than handling them simultaneously. We briefly review their key results for three cases: unconstrained optimization problems, optimization problems with only equality constraints, and optimization problems with only inequality constraints. The proofs use results from calculus and are omitted here. 

Consider the following general optimization problem: 

\begin{equation}
g(x) = \argmin_{y} f(x,y)
\end{equation}

where $f: \mathbb{R} \times \mathbb{R}^n \rightarrow \mathbb{R}$ is continuous and twice-differentiable. The general results found in \citet{gould2016differentiating} are provided below:

\textbf{Unconstrained} 

The gradient with respect to $x$ can be computed through implicit differentiation:

\begin{equation}
g'(x) = -f_{YY}(x,g(x))^{-1}f_{XY}(x,g(x))
\end{equation}

where $f_{YY} = \nabla^2_{yy}f(x,y)$ and $f_{XY} = \frac{\partial}{\partial x}\nabla_y f(x,y)$.

\textbf{Equality-Constrained}

Suppose we equality constraints of the form $Ay = b$, where $A \in \mathbb{R}^{m \times n}$ and $b \in \mathbb{R}^m$. For the general case, let $y_0$ be any vector satisfying $Ay_0 = b$ and let $F$ be a matrix whose columns span the null-space of $A$. Then:

\begin{equation}
g'(x) = -F(F^T f_{YY}(x,g(x))F)^{-1}F^T f_{XY}(x,g(x))
\end{equation}

In particular, when $\text{rank}(A) = m$ ($A$ has full row rank), a direct approach yields:
\begin{equation}
g'(x) = (H^{-1}A^T(AH^{-1}A^T)^{-1}AH^{-1} - H^{-1})f_{XY}(x,g(x))
\end{equation}
where $H = f_{YY}(x,g(x))$. 

\textbf{Inequality-Constrained}

Suppose we have inequality constraints $f_i(x,y) \leq 0$, $i=1,\ldots,m$. Using the barrier method (see \citet{boyd2004convex} for a more comprehensive discussion), the constrained problem can be approximated as:

\begin{equation}
\text{minimize}_y \quad tf_0(x,y) - \sum_{i=1}^m \log(-f_i(x,y))
\end{equation}

where $t > 0$ controls the approximation accuracy. The gradient can then be computed as:

\begin{equation}
g'(x) \approx -(tf_{YY}(x,g(x)) + \phi_{YY}(x,g(x)))^{-1}(tf_{XY}(x,g(x)) + \phi_{XY}(x,g(x)))
\end{equation}

where $\phi(x,y) = \sum_{i=1}^m \log(-f_i(x,y))$ is the barrier function.

While providing a mathematical foundation for optimization layers, this approach lacks a unified treatment of mixed constraints. There are also other inefficiencies, such as the use of the barrier method as an approximation to the inequality-constrained case, which can be numerically unstable and computationally expensive. The inversion of the Hessian matrices in the equality-constrained case can furthermore be unsuited for large-scale problems.

Thus, \citet{optnet} uses an adapated, more general version of the technique and \citet{differentiableconvexoptimizationlayers} uses the even more efficient technique of cone differentiation, which is discussed below. This new approach is shown to be general in \citet{differentiableconvexoptimizationlayers}, as a large subclass of convex optimization problems known as disciplined convex programming (DCP) problems admit reformulations as a cone problems. 

\subsection{Differentiation Through Cone Programs}

While argmin differentiation provides a theoretical foundation for differentiating through optimization problems, a more computationally efficient approach leverages the structure of cone programs. This procedure was developed in \citet{conedifferentiation} and is summarized here. The primal form of a cone program takes the form:
\begin{equation}
\begin{aligned}
\text{minimize} \quad & c^T x \\
\text{subject to} \quad & Ax + s = b \\
                        & s \in K
\end{aligned}
\end{equation}

where $K \subseteq \mathbb{R}^m$ is a proper convex cone (a convex set closed under positive scaling), $x \in \mathbb{R}^n$ is the primal optimization variable, and $s \in \mathbb{R}^m$ is a slack variable that captures inequality constraints. The Lagrangian function for this cone program is given by:
\begin{equation}
\mathcal{L}(x,s,y) = c^T x + y^T(Ax + s - b)
\end{equation}
Therefore, the dual optimization problem is of the form:
\begin{equation}
\begin{aligned}
\text{maximize} \quad & -b^T y \\
\text{subject to} \quad & A^T y + c = 0 \\
                        & y \in K^*
\end{aligned}
\end{equation}

where $y \in \mathbb{R}^m$ represents the dual variables and $K^* = {y \in \mathbb{R}^m : y^T x \geq 0 \text{ for all } x \in K}$ is the dual cone of $K$ (in particular, this is the set of all vectors \( y \) that form non-negative inner products with every vector \( x \) in the cone \( K \)). 

The next step is to understand how optimal solutions depend on and vary with problem parameters. To do this systematically, we need to derive the Karush-Kuhn-Tucker conditions for the system. 

\subsubsection{KKT Conditions for Convex Optimization Problems}

The Karush-Kuhn-Tucker (KKT) conditions form a cornerstone of optimization theory, defining optimality criteria for constrained optimization problems. The full treatment can be found in \citet{boyd2004convex}; the main results are summarized below. In the context of convex optimization, these conditions are both necessary and sufficient for optimality. Consider a convex optimization problem where we aim to minimize a convex objective function $f_0(\mathbf{x})$ subject to inequality constraints $f_i(\mathbf{x}) \leq 0$ and equality constraints $h_i(\mathbf{x}) = 0$. The problem can be expressed as:

\begin{align*}
    \text{Minimize } & \quad f_0(\mathbf{x}) \\
    \text{subject to } & \quad f_i(\mathbf{x}) \leq 0, \quad i = 1, \dots, m, \\
    & \quad h_i(\mathbf{x}) = 0, \quad i = 1, \dots, p.
\end{align*}

Here, the \textbf{primal problem} refers to this original formulation, while the \textbf{dual problem} arises by introducing Lagrange multipliers $\lambda_i$ (for $f_i$) and $\nu_i$ (for $h_i$), allowing us to construct a lower bound on the objective function. The dual problem maximizes this bound and provides insights into the sensitivity of the optimal solution to changes in the constraints. The KKT conditions unify these perspectives by characterizing the relationships between the primal and dual problems at optimality. The following theorem is a standard result in convex optimization:

\begin{theorem}[KKT Conditions for Convex Problems]
\label{thm:kkt}
For the above convex optimization problem, where $f_0$ and $f_i$ are convex and $h_i$ are affine, a point $(\mathbf{x}^*, \lambda^*, \nu^*)$ is optimal if and only if the following conditions hold:
\begin{enumerate}
    \item \textbf{Primal feasibility:} $f_i(\mathbf{x}^*) \leq 0$ and $h_i(\mathbf{x}^*) = 0$ for all $i$.
    \item \textbf{Dual feasibility:} $\lambda_i^* \geq 0$ for all $i$.
    \item \textbf{Complementary slackness:} $\lambda_i^* f_i(\mathbf{x}^*) = 0$ for all $i$.
    \item \textbf{Stationarity:}
    \[
    \nabla f_0(\mathbf{x}^*) + \sum_{i=1}^m \lambda_i^* \nabla f_i(\mathbf{x}^*) + \sum_{i=1}^p \nu_i^* \nabla h_i(\mathbf{x}^*) = 0.
    \]
\end{enumerate}
\end{theorem}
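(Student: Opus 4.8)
The plan is to prove the two directions of the equivalence separately, handling the sufficiency direction (KKT conditions $\Rightarrow$ optimality) first, since it is self-contained and requires no constraint qualification, and then the necessity direction (optimality $\Rightarrow$ KKT conditions), which rests on strong duality.

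For sufficiency, I would form the Lagrangian $\mathcal{L}(\mathbf{x}, \lambda, \nu) = f_0(\mathbf{x}) + \sum_{i=1}^m \lambda_i f_i(\mathbf{x}) + \sum_{i=1}^p \nu_i h_i(\mathbf{x})$ and observe that, because each $f_i$ is convex, each $h_i$ is affine, and dual feasibility gives $\lambda_i^* \geq 0$, the map $\mathbf{x} \mapsto \mathcal{L}(\mathbf{x}, \lambda^*, \nu^*)$ is convex. The stationarity condition states exactly that its gradient vanishes at $\mathbf{x}^*$, so $\mathbf{x}^*$ is a global minimizer of the Lagrangian. The key step is then the chain of inequalities: for any primal-feasible $\mathbf{x}$,
\[
f_0(\mathbf{x}) \geq f_0(\mathbf{x}) + \sum_{i=1}^m \lambda_i^* f_i(\mathbf{x}) + \sum_{i=1}^p \nu_i^* h_i(\mathbf{x}) = \mathcal{L}(\mathbf{x}, \lambda^*, \nu^*) \geq \mathcal{L}(\mathbf{x}^*, \lambda^*, \nu^*) = f_0(\mathbf{x}^*),
\]
where the first inequality uses $\lambda_i^* \geq 0$ together with $f_i(\mathbf{x}) \leq 0$ and $h_i(\mathbf{x}) = 0$, the second uses that $\mathbf{x}^*$ minimizes the Lagrangian, and the final equality collapses via complementary slackness ($\lambda_i^* f_i(\mathbf{x}^*) = 0$) and primal feasibility ($h_i(\mathbf{x}^*) = 0$). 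This yields $f_0(\mathbf{x}^*) \leq f_0(\mathbf{x})$ for all feasible $\mathbf{x}$, so $\mathbf{x}^*$ is optimal.

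For necessity I would assume a constraint qualification (Slater's condition suffices) so that strong duality holds and the dual optimum is attained; this is the one external ingredient I would invoke rather than reprove, citing the strong-duality theorem in \citet{boyd2004convex}. Given primal-optimal $\mathbf{x}^*$ and dual-optimal $(\lambda^*, \nu^*)$ with zero duality gap, I would write the sandwich
\[
f_0(\mathbf{x}^*) = \inf_{\mathbf{x}} \mathcal{L}(\mathbf{x}, \lambda^*, \nu^*) \leq \mathcal{L}(\mathbf{x}^*, \lambda^*, \nu^*) = f_0(\mathbf{x}^*) + \sum_{i=1}^m \lambda_i^* f_i(\mathbf{x}^*) \leq f_0(\mathbf{x}^*),
\]
using $h_i(\mathbf{x}^*) = 0$ in the middle equality and $\lambda_i^* \geq 0$, $f_i(\mathbf{x}^*) \leq 0$ in the last. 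Since the two ends coincide, every inequality is tight: equality in the last step forces $\sum_i \lambda_i^* f_i(\mathbf{x}^*) = 0$, and because each summand is nonpositive this gives complementary slackness term by term; equality in the infimum shows $\mathbf{x}^*$ minimizes the convex Lagrangian, whose first-order condition is precisely stationarity. Primal and dual feasibility are inherited directly from the primal optimality of $\mathbf{x}^*$ and the dual feasibility of $(\lambda^*, \nu^*)$.

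The main obstacle is the necessity direction's reliance on strong duality: without a constraint qualification, an optimal primal point need not admit multipliers satisfying stationarity, so the ``only if'' can fail, and the theorem as stated implicitly presumes such a qualification. The substantive work—establishing that the duality gap vanishes under Slater's condition—is itself a nontrivial separating-hyperplane argument, which I would quote from \citet{boyd2004convex} rather than rederive; the remainder of the proof is the comparatively routine bookkeeping of the inequality chains above.
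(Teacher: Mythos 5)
Your proof is correct and is essentially the standard two-direction argument from \citet{boyd2004convex}, which the paper cites in lieu of giving any proof of its own (the theorem is stated as a standard result and the proof is omitted entirely). Your observation that the necessity direction requires a constraint qualification such as Slater's condition is a valid and worthwhile catch: the ``only if'' claim as stated in the theorem silently presumes it, and the paper only mentions Slater's condition afterward in connection with strong duality.
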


Furthermore, as long as there is one strictly feasible point, the primal and dual solutions coincide at optimality via Slater's condition, as strong duality is obtained.

\subsubsection{Solution Map Decomposition}

Thus, in the case of a cone program, we have that:

\begin{enumerate}
    \item \textbf{Primal feasibility} is given by the primal constraints:
    \[
    A x + s = b, \quad s \in K,
    \]

    \item \textbf{Dual feasibility} is given by the dual constraints:
    \[
    A^T y + c = 0, \quad y \in K^*,
    \]
    
    \item \textbf{Complementary slackness} is given by:
    \[
    s^T y = 0,
    \]
    which ensures that the slack variable \( s \) and the dual variable \( y \) are orthogonal, meaning \( s \) and \( y \) cannot both be strictly positive at any point.

    \item The \textbf{stationary} condition is equivalent to the dual feasibility condition, as it is given by:
    \[
    \nabla_x \mathcal{L}(x, s, y) = c + A^T y = 0,
    \]
\end{enumerate}

Thus the solution map to the primal-dual conic program is of the form 
\[
S: \mathbb{R}^{m \times n} \times \mathbb{R}^m \times \mathbb{R}^n \to \mathbb{R}^n \times \mathbb{R}^m \times \mathbb{R}^m,
\]
\[
\text{where } S(A, b, c) = (x^*, y^*, s^*) \in 
\left\{
\begin{aligned}
    & (x, y, s) \mid A x + s = b, \; A^T y + c = 0, \\
    & s \in K, \; y \in K^*, \; s^T y = 0
\end{aligned}
\right\}
\]

In particular, we aim to find the derivative of the solution map \(S\) with respect to the problem data \((A, b, c)\), denoted DS(A, b, c). To do this, decompose the solution map \(S: (A,b,c) \mapsto (x,y,s)\) into three components: \(S = \phi \circ \psi \circ Q\), where

\[
Q : \mathbb{R}^{m \times n} \times \mathbb{R}^m \times \mathbb{R}^n \to \mathcal{Q}
\]
\[
s : \mathcal{Q} \to \mathbb{R}^N
\]
\[
\phi : \mathbb{R}^N \to \mathbb{R}^n \times \mathbb{R}^m \times \mathbb{R}^m
\]In this formulation, $Q$ maps the problem data to a skew-symmetric matrix (a matrix where \(Q = -Q^T\)) that encodes the problem structure:
\begin{equation}
Q(A,b,c) = \begin{pmatrix} 
0 & A^T & c \\
-A & 0 & b \\
-c^T & -b^T & 0
\end{pmatrix}.
\end{equation}

Furthermore, $s$ yields a solution to the so-called homogeneous self-dual embedding introduced in \citet{Yeselfdualemb} and \citet{Xuselfdualemb}, which a mathematical formulation  that transforms a primal-dual optimization problem into a single system of equations. In particular, solving this system ensures that the resulting solution satisfies all the primal-dual KKT conditions. Finally, $\phi$ maps a solution of the homogeneous self-dual embedding to the solution of the primal-dual pair. 

By the chain rule, we have 

\[
DS(A, b, c) = D\phi(z) \, Ds(Q) \, DQ(A, b, c).
\]

To compute the derivative of the solution map $DS(A,b,c)$, the authors utilize three key transformations ensure mathematical tractability. The first step maps the problem data perturbations $(dA,db,dc)$ into a skew-symmetric matrix $dQ$ by directly arranging them into the corresponding block structure, preserving the critical skew-symmetric property required for feasibility. The most challenging component is computing $Ds$ - this requires solving a large linear system $Mdz = -g$ that encodes how these perturbations affect the solution. Here $M = ((Q-I)D\Pi(z)+I)/w$ combines the problem matrices with projection derivatives, scaled by the solution parameter $w$. The vector $g = dQ\Pi(z/|w|)$ represents how perturbations affect the normalized residual map that measures optimality. Since directly forming and inverting $M$ would be computationally infeasible for large problems, the authors employ LSQR (Least Squares QR) - an iterative method introduced in \citet{lsqr}. LSQR gradually builds up the solution by combining QR matrix factorization with a technique called bidiagonalization, requiring only matrix-vector products rather than storing or inverting the full matrix $M$. The final transformation $D\phi$ converts the computed changes $dz$ into perturbations in the optimization variables $(dx,dy,ds)$ through carefully constructed projections onto the dual cone. For computing derivatives in the backward pass needed for deep learning, this process is reversed using the same efficient numerical techniques. The authors demonstrate this approach scales to problems with millions of variables through an efficient implementation avoiding any explicit matrix operations.

\section{OptNet: Integrating QP Layers Into Neural Architecture}
We begin our discussion of convex optimization layers by deriving the key results in \citet{optnet}, the first major attempt to integrate such layers into neural networks in an end-to-end fashion. The goal is to make the mathematical theory underpinning them rigorous and explore the intuition behind the various components. We will then discuss the expressive power of these layers, and conclude with a discussion of their limitations. 

\subsection{Method}

The general idea in \citet{optnet} cast a convex QP as an optimization layer, where the output  $z_{i+1}$ of the layer is the solution of the problem and the constraints are parametrized by the output of the previous layer $z_i$. In particular, for optimization variable $z \in \mathbb{R}^n$, we have: 

\[
z_{i+1} = \text{argmin}_{z} \frac{1}{2} z^T Q(z_i) z + q(z_i)^T z
\]
\[
\text{subject to} \quad A(z_i) z = b(z_i), \quad G(z_i) z \leq h(z_i).
\]

where \( Q \in \mathbb{R}^{n \times n} \succeq 0 \) ensures convexity of the objective, and \( q \in \mathbb{R}^n \), 
\( A(z_i) \in \mathbb{R}^{m \times n} \), \( b(z_i) \in \mathbb{R}^m \), \( G(z_i) \in \mathbb{R}^{p \times n} \), and \( h(z_i) \in \mathbb{R}^p \) are problem data. 

The first step is to construct the Lagrangian. For notational clarity, we temporarily suppress the $z_i$ dependence for the rest of the analysis:

\[
L(z,\nu,\lambda) = \frac{1}{2}z^T Q z + q^T z + \nu^T(Az - b) + \lambda^T(Gz - h)
\]

where $\nu \in \mathbb{R}^m, \lambda \in \mathbb{R}^p$ are dual variables and $\lambda \geq 0$. By \ref{thm:kkt}, the optimal solution $(z^*, \nu^*, \lambda^*)$ must satisfy: 

\begin{enumerate}
    \item \textbf{Primal feasibility:} \\
    The primal variables $z^*$ must satisfy the equality and inequality constraints:
    \[
    Az^* = b, \quad Gz^* \leq h.
    \]

    \item \textbf{Dual feasibility:} \\
    As we already noted, the dual variables $\lambda^*$ must be non-negative, because they are associated with inequality constraints:
    \[
    \lambda^* \geq 0.
    \]

    \item \textbf{Complementary slackness:} \\
    For each inequality constraint, either the constraint is active ($Gz^* = h$) or the corresponding dual variable is zero ($\lambda^* = 0$):
    \[
    \lambda_i^* (Gz^* - h)_i = 0 \quad \forall i \in \{1, \ldots, p\}.
    \]

    \item \textbf{Stationarity:} \\
    The gradient of the Lagrangian with respect to the primal variables $z$ must vanish:
    \[
    \nabla_z L(z^*, \nu^*, \lambda^*) = Qz^* + q + A^T\nu^* + G^T\lambda^* = 0.
    \]
\end{enumerate}

Vectorizing the conditions and noting that the stationary condition encompasses the primal feasibility requirement $Gz^* \leq h$, the necessary and sufficient conditions for optimality (given $\lambda \geq 0)$ are: 

\begin{equation}
    \begin{aligned}
        Qz^* + q + A^T\nu^* + G^T\lambda^* = 0 \\
        Az^* - b = 0\\
        \diag({\lambda_i^*})(Gz^* - h)_i = 0
    \end{aligned}
\end{equation}

To approximate how the solution $(z^*, \nu^*, \lambda^*)$ changes with respect to perturbations in the problem parameters, we take the differential:

\begin{equation}
    \begin{aligned}
        dQ z^* + Q dz + dq + dA^T \nu^* + A^T d\nu + dG^T \lambda^* + G^T d\lambda &= 0, \\
        dA z^* + A dz - db &= 0, \\
        \diag(\lambda^*) d(Gz^* - h) + \diag(Gz^* - h) d\lambda &= 0,
    \end{aligned}
\end{equation}

Which admits the matrix formulation:

\begin{equation}
    \begin{bmatrix}
        Q & G^T & A^T \\
        \diag(\lambda^*) G & \diag(Gz^* - h) & 0 \\
        A & 0 & 0
    \end{bmatrix}
    \begin{bmatrix}
        dz \\
        d\lambda \\
        d\nu
    \end{bmatrix}
    =
    -\begin{bmatrix}
        dQ z^* + dq + dG^T \lambda^* + dA^T \nu^* \\
        \diag(\lambda^*) dG z^* - \diag(\lambda^*) dh \\
        dA z^* - db
    \end{bmatrix}
\end{equation}

Note that the goal is not to explicitly calculate the Jacobian matrices with respect to the parameters (e.g.$\frac{\partial z^*}{\partial b} \in \mathbb{R}^{n \times m}$), but rather to compute the Jacobian-vector products used in backpropogation (e.g. $\frac{\partial \ell}{\partial z^*} \frac{\partial z^*}{\partial b}$). This avoids explicitly constructing large Jacobian matrices.
To efficiently compute the Jacobian-vector product required in the backward pass, the differential conditions can be leveraged. 

The directional derivatives needed in the chain rule can be found by noting that only $\frac{\partial \ell}{\partial z^*}$ is relevant for propogating gradients and is agnostic to perturbations in the constraints. Therefore, we can write: 

\begin{equation}
\begin{bmatrix}
d_z \\
d_\lambda \\
d_\nu
\end{bmatrix}
=
-
\begin{bmatrix}
Q & G^\top \diag(\lambda^*) & A^\top \\
G & \diag(Gz^* - h) & 0 \\
A & 0 & 0
\end{bmatrix}^{-1}
\begin{bmatrix}
\frac{\partial \ell}{\partial z^*} \\
0 \\
0
\end{bmatrix}.
\end{equation}

The gradient updates are given by: 
\begin{equation}
    \begin{aligned}
        \nabla_Q \ell &= \frac{1}{2}\left(d_z z^{*\top} + z^* d_z^\top\right), \\
        \nabla_A \ell &= d_\nu z^{*\top} + \nu^* d_z^\top, \\
        \nabla_G \ell &= \diag(\lambda^*)d_\lambda z^{*\top} + \lambda^* d_z^\top, \\
        \nabla_q \ell &= d_z, \\
        \nabla_b \ell &= -d_\nu, \\
        \nabla_h \ell &= -\diag(\lambda^*) d_\lambda.
    \end{aligned}
\end{equation}

The authors implement a batched QP solver to make the approach practical for deep learning. Standard solvers like Gurobi and CPLEX, while optimal for solving individual QPs, process QPs sequentially on CPUs. This creates a massive bottleneck when applied to mini-batches in neural networks. To address this, the authors developed a GPU-bound solver that solves all QPs in a mini-batch in parallel, dramatically improving efficiency.

The computational bottleneck lies in factorizing the KKT matrix, which encodes the relationships between the primal and dual variables as well as the constraints. This factorization is necessary because it allows efficient solving of the linear systems required by the primal-dual interior point method. The complexity of factorization is cubic (\( \mathcal{O}(n^3) \)) in the number of variables, but once computed during the forward pass, it is reused to solve linear systems during backpropagation, which has quadratic complexity (\( \mathcal{O}(n^2) \)).

The GPU-based solver exploits parallelism by solving all QPs in the mini-batch simultaneously. For a batch size of \( N \), the parallel implementation reduces the overall complexity to \( \mathcal{O}(n^3 + N n^2) \), compared to the sequential cost of \( \mathcal{O}(N n^3) \) with CPU-based solvers. By sharing the expensive KKT factorization across the batch, the approach ensures both forward and backward passes are computationally efficient and scalable to large mini-batches, making it feasible for end-to-end training in deep learning systems.

 The full details are found in \citet{optnet}, and the solver is based on a primal-dual interior point method developed in \citet{mattingley2012cvxgen}.

\subsection{Expressivity}

A key benefit of these layers is their expressive power. The key results obtained in the paper are summarized and proved here with greater rigor:

The first result is that an OptNet layer can represent any  elementwise piecewise linear function, in particular $\text{ReLU}(\cdot)$.

\begin{theorem}
    An elementwise piecewise linear function $f: \mathbb{R}^n \to \mathbb{R}^{n}$ with $k$ linear regions can be represented by OptNet layer using $\mathcal{O}(nk)$ parameters. In particular, we can represent the ReLU layer $z_{i+1} = \max\{Wz_i + b, 0\}$ (where $W \in \mathbb{R}^{n \times m}, b \in \mathbb{R}^n$) by an OptNet layer with $\mathcal{O}(mn)$ parameters. 
\end{theorem}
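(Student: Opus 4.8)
The plan is to realize the target map as the \argmin\ of a parametrized QP and to verify optimality directly through the KKT conditions of Theorem~\ref{thm:kkt}. The guiding observation is that Euclidean projection onto a polyhedron is itself piecewise linear, so one expects the converse: every elementwise piecewise linear map should arise as the solution map of a suitable QP. I would first dispatch the ReLU special case as a clean prototype and then bootstrap the general statement from it.

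For the ReLU layer, take the optimization variable to be the output $z \in \mathbb{R}^n$ and choose $Q = I$, $q = -(Wz_i + b)$, no equality constraints, and the single inequality block $G = -I$, $h = 0$; the objective is then $\tfrac{1}{2}\|z - (Wz_i+b)\|^2$ up to a constant, so $z^\star$ is the projection of $Wz_i+b$ onto the nonnegative orthant. Writing the stationarity condition $z^\star - (Wz_i+b) - \lambda^\star = 0$ together with $\lambda^\star \ge 0$, $z^\star \ge 0$, and complementary slackness $\diag(\lambda^\star)\,z^\star = 0$, a short coordinatewise case analysis ($y_i>0$ versus $y_i \le 0$, where $y = Wz_i+b$) shows $z^\star = \max\{Wz_i+b,0\}$. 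The only learnable data are $W$ and $b$, giving $nm + n = \mathcal{O}(mn)$ parameters, since $Q$, $G$, $h$ are fixed.

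For a general elementwise map it suffices to represent a single continuous scalar piecewise linear function $g$ with $k$ regions, breakpoints $p_1 < \cdots < p_{k-1}$, and slope changes $w_l$, and then stack $n$ independent copies. I would use the canonical representation $g(t) = s_0 t + c + \sum_{l=1}^{k-1} w_l \max\{t - p_l, 0\}$, valid for any continuous piecewise linear $g$ with the $w_l$ of arbitrary sign. Introducing auxiliary variables $r_l$ and an output variable $u$, I take the objective $\tfrac{1}{2}\sum_l r_l^2$, the inequality constraints $r_l \ge t - p_l$ (with $t$ an affine function of the input, entering through $h$), and the equality $u = s_0 t + c + \sum_l w_l r_l$. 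Since the objective is separable in the $r_l$ and independent of $u$, each $r_l$ is driven to the minimizer of $\tfrac{1}{2} r_l^2$ on the ray $\{r_l \ge t-p_l\}$, namely $\max\{t-p_l,0\}$, whence $u^\star = g(t)$. Each scalar function contributes $\mathcal{O}(k)$ variables and data, so the stacked problem uses $\mathcal{O}(nk)$ parameters.

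The main obstacle is the non-convex case: when some slope changes $w_l$ are negative, $g$ is neither convex nor concave, and a naive max-of-affine (LP) encoding fails. The auxiliary-variable device is what resolves this, since the sign of $w_l$ appears only in the linear equality defining $u$ and never in the objective, so each $r_l$ is pinned to $\max\{t-p_l,0\}$ independently of whether its contribution to the output is increasing or decreasing. I would take care to confirm that, although $Q$ is only positive semidefinite (it vanishes on the $u$-direction), the strict convexity on the $r$-block together with the equality constraint still makes $z^\star$ unique and the construction a bona fide QP layer; verifying this uniqueness and checking the coordinatewise KKT analysis are the steps requiring the most attention.
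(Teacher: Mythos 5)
Your proposal is correct and follows essentially the same route as the paper: both encode each piece via an auxiliary variable $t_i$ (your $r_l$) constrained by $t_i \ge a_i x + b_i$ and pinned to $\max\{a_i x + b_i, 0\}$ by a separable quadratic objective, with the output formed as a linear combination, and both treat ReLU as the projection $\min_z \|z - (Wx+b)\|_2^2$ subject to $z \ge 0$. The only material difference is that you couple the output through a hard equality constraint (making $Q$ merely positive semidefinite, a uniqueness issue you correctly flag and resolve), whereas the paper folds the coupling into the objective as a least-squares term $\|z - w^T t\|_2^2$, which keeps the quadratic form strictly convex and sidesteps that check.
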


\begin{proof}
    A piecewise linear function can be expressed in terms of maximum operators. For instance, the univariate piecewise linear function $f(x)$ with $k$ linear regions can be written as:
    \[
    f(x) = \sum_{i=1}^k w_i \max\{a_i x + b_i, 0\},
    \]
    where $w_i \in \{-1, 1\}$ and $a_i, b_i \in \mathbb{R}$. This representation is derived by iteratively constructing linear segments based on breakpoints.

    To encode this function in an OptNet layer, we define the following quadratic program:
    \[
    \min_{z, t \in \mathbb{R}^k} \|t\|_2^2 + \|z - w^T t\|_2^2 \quad \text{subject to } a_i x + b_i \leq t_i, \, \forall i \in \{1, \ldots, k\}.
    \]
    The optimization ensures that each $t_i$ takes the value $\max\{a_i x + b_i, 0\}$ due to the constraint structure and objective minimization, while $z = w^T t$ reproduces the output $f(x)$. By applying this elementwise, we can extend the formulation to $n$ dimensions, maintaining a parameter complexity of $\mathcal{O}(nk)$.

    For the specific case of ReLU, where $z = \max\{Wx + b, 0\}$, we observe that this is equivalent to the optimization:
    \[
    \min_{z} \|z - (Wx + b)\|_2^2 \quad \text{subject to } z \geq 0.
    \]
    Here, the parameter complexity is dominated by $W \in \mathbb{R}^{n \times m}$, resulting in $\mathcal{O}(mn)$ parameters.
\end{proof}

The authors moreover suggests that an OptNet layer is actually more expressive than a two-layer $ReLU$ network

\begin{theorem}
    There exists a function $f(z): \mathbb{R}^n \to \mathbb{R}$, represented by an OptNet layer with $p$ parameters, that the two-layer ReLU network $g(x) = \sum_{i=1}^k w_i \max\{a_i^{\top} z + b_i, 0\}$ cannot globally represent over $\mathbb{R}$ and which require $\mathcal{O}(c^p)$ parameters to approximate over any finite region.
\end{theorem}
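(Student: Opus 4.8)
The plan is to prove an exponential separation by counting the number of affine pieces — equivalently, the number of times the graph crosses a fixed horizontal level — that each model can produce relative to its parameter budget. The guiding principle is that a two-layer ReLU network is \emph{shallow}: its region count grows only linearly in its width, whereas a single QP layer can, through the combinatorics of its optimal active set, pack exponentially many affine pieces into a map described by only $O(m)$ numbers. I would therefore exhibit one target $f$ realized by a QP with $p = O(m)$ parameters that oscillates $2^{\Omega(m)} = c^{\Omega(p)}$ times on a bounded interval, and then show that no sub-exponentially-wide shallow network can track those oscillations.

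First I would record the shallow upper bound. In one input dimension each unit $\max\{a_i z + b_i, 0\}$ contributes a single breakpoint at $z = -b_i/a_i$, so $g$ is globally piecewise affine with at most $k+1$ pieces and hence at most $k$ local extrema; in particular its graph crosses any fixed horizontal line at most $O(k)$ times. (In $\mathbb{R}^n$ the analogue is the hyperplane-arrangement count $O(k^n)$, but the clean one-dimensional bound is what drives the separation.) Consequently a shallow network with $N$ trainable parameters realizes a function with only $\mathrm{poly}(N)$ pieces, and it is asymptotically affine in each direction.

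Second — and this is the heart of the argument — I would construct the QP target. By the first expressivity theorem the QP layer reproduces $\max$ and piecewise-linear maps cheaply, but that generic construction spends $\Theta(\text{pieces})$ parameters; the separation instead demands exponential compression, so I would build a QP whose solution map realizes exponentially many optimal active sets (critical regions) even though its data uses only $O(m)$ numbers. Concretely, I would let the scalar input enter the constraint right-hand side of a structured feasibility/box QP so that, as $z$ sweeps $[0,1]$, the optimal active set cycles through $2^{\Omega(m)}$ distinct configurations, and I would read off a fixed linear functional $c^{\top} x^{\star}(z)$ tuned to yield a sawtooth of $2^{\Omega(m)}$ teeth of amplitude bounded below. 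That QPs can possess exponentially many critical regions in the number of constraints is a known phenomenon from multiparametric programming, which I would either invoke or instantiate explicitly. I expect this to be the main obstacle: I must simultaneously (i) keep the parameter count at $O(m)$, (ii) force genuinely non-monotone oscillation rather than merely many monotone pieces — note that parametrizing the linear objective term tends to produce a cyclically monotone (hence non-oscillating) solution map, since $x^{\star}(z)$ is then the gradient of a convex value function, so the dependence must be carried by the constraints — and (iii) guarantee a uniform lower bound on the tooth amplitude so that \emph{approximation}, not just exact representation, is obstructed.

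Finally I would combine the two bounds. Since $f$ crosses its mid-level $2^{\Omega(p)}$ times on $[0,1]$, any continuous approximant $\hat g$ with $\|\hat g - f\|_{\infty}$ below half the tooth amplitude must itself cross that level $2^{\Omega(p)}$ times and therefore have $2^{\Omega(p)}$ affine pieces; by the shallow bound this forces width $k = 2^{\Omega(p)}$, i.e. $\Omega(c^{p})$ parameters, on any finite region containing the teeth. The same count shows the fixed network $g$ of width $k$ cannot equal $f$ on all of $\mathbb{R}$, since its $O(k)$ pieces cannot reproduce $f$'s $2^{\Omega(p)}$ pieces, and the constant $c > 1$ is precisely the per-parameter base of the critical-region growth. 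This is a Telgarsky-style oscillation-counting argument, transplanted from the depth-versus-width setting to the QP-versus-shallow-ReLU setting.
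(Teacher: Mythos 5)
Your approach is genuinely different from the paper's, but it has two real gaps, one of which is fatal to the statement as written. The paper's proof is geometric rather than combinatorial: it takes $f(x)=\max\{a_1^\top x,a_2^\top x,a_3^\top x\}$ on $\mathbb{R}^2$ and observes that the non-differentiable set of $f$ is a union of line segments/rays that \emph{terminate} at the points where the pieces meet, whereas every unit $\max\{a_i^\top z+b_i,0\}$ contributes a \emph{full, unbounded} line of non-differentiability, and no finite union of full lines can coincide with a union of terminating segments. That is what delivers the ``cannot globally represent over $\mathbb{R}$ for any width $k$'' half of the claim. Your construction cannot deliver that half: a one-dimensional sawtooth produced by a QP with finitely many constraints is a continuous piecewise-linear function with finitely many breakpoints, and every such function \emph{is} exactly representable by a sufficiently wide two-layer ReLU network (constants and the identity are in the span of the units, and each breakpoint costs one unit). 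So your target $f$ is globally representable; you prove only the quantitative approximation lower bound, not the exact-representation impossibility. The theorem asks for a single function satisfying both properties simultaneously, which essentially forces the input dimension to be at least $2$ so that the kink-set geometry can obstruct exact representation.

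The second gap is that the heart of your argument --- a QP described by $O(m)$ numbers whose solution map, as a single scalar parameter sweeps an interval of the constraint right-hand side, passes through $2^{\Omega(m)}$ critical regions \emph{and} oscillates with a uniform lower bound on the tooth amplitude --- is exactly the step you defer. You correctly identify the obstructions (cyclic monotonicity of objective-parametrized solution maps, the need for non-monotone oscillation rather than many monotone pieces, the amplitude bound needed to convert piece-counting into an approximation lower bound), but invoking ``exponentially many critical regions exist in multiparametric QP'' does not give you exponentially many regions \emph{along one line} in parameter space, nor an amplitude bound. Since the entire exponential separation rests on this construction, the proposal as it stands is a plan rather than a proof. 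If you want to salvage the oscillation-counting route, you would still need to graft on a separate argument (such as the paper's terminating-kink-set observation) for the global non-representability clause.
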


\begin{proof}
Consider $f(x) = \max\{a_1^\top x, a_2^\top x, a_3^\top x\}$ with linearly independent $a_1,a_2,a_3 \in \mathbb{R}^2$. This function admits an OptNet formulation:
\begin{equation*}
\begin{aligned}
\min_{z} \quad & z^2 \\
\text{subject to} \quad & a_i^\top x \leq z, \quad i = 1, 2, 3
\end{aligned}
\end{equation*}

The non-differentiable set $\mathcal{N}_f$ of $f$ forms three bounded line segments, terminating at their intersections. However, each ReLU unit contributes a line of non-differentiability to $\mathcal{N}_{f'}$ that extends to infinity. The geometric incompatibility of bounded and unbounded lines makes exact representation impossible.

For a fixed radius $r$, achieving $\epsilon$-approximation requires covering the terminating boundaries with ReLU units. Due to the unbounded nature of ReLU boundaries, the network must use an exponentially growing number of units as $\epsilon$ decreases to maintain accuracy throughout the region. Since the OptNet representation uses only $p$ parameters, this establishes the $\mathcal{O}(c^p)$ bound.
\end{proof}

\section{Differentiable Convex Optimization Layers}

The next major advancement in the field was the extension to a more general class of convex optimization layers in \citet{differentiableconvexoptimizationlayers}. The method furthermore automates the conversion of the optimization problem to canonical form and uses the efficient method of differentiating cone problems found in \citet{conedifferentiation}. We will discuss the methods used in \citet{differentiableconvexoptimizationlayers} which make this possible, which include the invention of a new grammar for convex problems and way of canonicalizing the problem to the corresponding (differentiable) cone problem. 

\subsection{Formal Grammar}
To ensure the method is easily generalizable to a wide class of convex optimization problem, it is necessary to define a rigorous grammar for constructing convex optimization problems. 

\subsubsection{Disciplined Convex Programming}
The grammar developed in \citet{differentiabilitysolutionconvexoptimization} is an extension of a well-known grammar, \textbf{disciplined convex programming (DCP)} \citet{dcp}, which encompasses a large class of convex opimization problems. DCP relies on atomic functions - basic components with explicitly defined mathematical properties - and a composition framework that preserves convexity through principled combination of these atoms.

Each atomic function must specify three key attributes: its curvature classification (whether it behaves as an affine, convex, or concave function), its monotonicity characteristics per input argument, and its valid domain of operation. These properties enable systematic verification of convexity when atoms are combined.

The theoretical foundation lies in a composition principle: Let $h$ represent a convex function mapping from $\mathbb{R}^k$ to $\mathbb{R}$, with monotonicity properties defined by two key index sets - $I_1$ for arguments where $h$ is nondecreasing, and $I_2$ for those where it is nonincreasing. When composed with functions $g_i$ mapping from $\mathbb{R}^n$ to $\mathbb{R}$ that satisfy specific curvature conditions (convex for indices in $I_1$, concave for $I_2$, and affine elsewhere), the resulting function $f(x) = h(g_1(x),...,g_k(x))$ maintains convexity.

This composition theorem enables verification through rule-checking rather than complex mathematical analysis. While this means some valid convex problems cannot be expressed directly in the framework, the atom library concept provides extensibility - new functions can be added with verified properties, expanding the scope of expressible problems while maintaining the verification guarantees. In particular, DCP is the grammar behind many popular libraries such as Python's \texttt{cvxpy}.

\subsubsection{Disciplined Parametrized Programming}

A key observation in \citet{differentiableconvexoptimizationlayers} is that DCP does not ensure that the mapping from problem parameters to solutions is structured in a way that enables efficient analytical differentiation through the optimization process. Thus, it is necessary to modify DCP slightly to create a new grammar which they call disciplined paramatrized programming (DPP). The main feature of this grammar is that the produced program can be reduced (canonicalized) to a special form called \textbf{affine-solver-affine} (ASA) form. Backpropogation through the optimization layer is agnostic to the operations in the canonicalization of a DPP to ASA form and thus allows automatic differentiation throughout the network. 

simplifying differentiation by enabling efficient computation of gradients without directly backpropagating through the complex steps of problem transformation,

DPP, like DCP, relies on a set of atomic functions with explicitly defined curvature (constant, affine, convex, or concave) and per-argument monotonicities. These atoms are combined according to composition rules derived from the composition theorem for convex functions, ensuring that the resulting expressions preserve convexity. Problems in both DCP and DPP can be represented as computational trees, where the nodes represent atomic functions and the leaves correspond to variables, constants, or parameters. This compositional structure enables systematic verification of convexity by analyzing the curvature of individual components and the rules governing their combination.

DPP differs from DCP in how it treats parameters and imposes additional restrictions on their usage to enable more expressive parameterized problems while maintaining convexity guarantees. 

A disciplined parametrized program is an optimization problem of the form:
\[
\begin{aligned}
\text{minimize} \quad & f_0(x, \theta) \\
\text{subject to} \quad & f_i(x, \theta) \leq \tilde{f}_i(x, \theta), \quad i = 1, \ldots, m_1, \\
& g_i(x, \theta) = \tilde{g}_i(x, \theta), \quad i = 1, \ldots, m_2,
\end{aligned}
\]
where \( x \in \mathbb{R}^n \) represents the optimization variables, \( \theta \in \mathbb{R}^p \) are parameters, \( f_i \) are convex functions, \( \tilde{f}_i \) are concave functions, and \( g_i, \tilde{g}_i \) are affine functions. Parameters are symbolic constants with known properties, such as sign or monotonicity, but without fixed numerical values. An expression is said to be \textit{parameter-affine} if its leaves do not include variables and is affine in its parameters; an expression is said to be \textit{parameter-free} or \textit{variable-free} if it does not have parameters or variables respectively.

Namely, DPP differs from DCP in two ways:

\begin{enumerate}
    \item \textbf{Treatment of Parameters}.** In DCP, parameters are treated as constants with fixed curvature and monotonicity properties. This restricts their use in constructing optimization problems. DPP, on the other hand, treats parameters as affine objects, similar to variables, allowing them to participate more flexibly in parameterized expressions. This change enables DPP to express problems involving linear relationships between parameters and variables while preserving the convexity guarantees.
    \item  \textbf{Generalized Composition Rules} DPP relaxes some restrictions on atomic operations involving parameters. For example, in DCP, the product atom \(\phi_{\text{prod}}(x, y) = xy\) is classified as affine only if one of its arguments is a constant (i.e., variable-free). Under DPP, this rule is expanded, also allowing the product to be classified as affine if one argument is parameter-affine and the other is parameter-free. This expansion enables the representation of a wider range of parameterized optimization problems.
\end{enumerate}

Every DPP program satisfies the rules of DCP, but the converse is not true. DPP extends the expressive power of DCP by introducing additional flexibility in the treatment of parameters and broadening the conditions under which operations are considered convex, concave, or affine. Despite these differences, both frameworks rely on the same composition-based approach to systematically verify the structure of optimization problems.

\subsection{Canonicalization to Affine-Solver-Affine Form}

Consider a disciplined convex program with variable $x \in \mathbb{R}^n$ parametrized by $\theta \in \mathbb{R}^p$. The solution map $S: \mathbb{R}^p \to \mathbb{R}^n$ maps parameters to solutions. To enable backpropagation through this layer in a neural network, we must determine how to compute the adjoint of the derivative $D^T S(\theta)$. Here $Df(x)$ denotes the derivative (Jacobian) of a function $f$ evaluated at point $x$, while $D^T f(x)$ denotes its adjoint - the transpose of the Jacobian matrix. The adjoint is critical as it enables efficient computation of gradients during backpropagation.

The solution map $S$ can be expressed as a composition of three maps: $S = R \circ s \circ C$, where canonicalizer $C$ maps parameters to cone program data $(A, b, c)$, cone solver $s$ produces solution $\tilde{x}^\star$, and retriever $R$ maps $\tilde{x}^\star$ to original solution $x^\star$. When both $C$ and $R$ are affine maps, we say the problem is in \textbf{affine-solver-affine (ASA) form}. For this composition, we can apply the chain rule to derivatives. For functions $f$ and $g$, recall that $D(g \circ f)(x) = Dg(f(x))Df(x)$. Taking the adjoint of both sides gives $(D(g \circ f)(x))^T = (Df(x))^T(Dg(f(x)))^T$. Applying this twice to our three-function composition yields:

\[
D^T S(\theta) = D^T C(\theta)D^T s(A, b, c)D^T R(\tilde{x}^\star)
\]

In standard DCP canonicalization, computing these derivatives requires tracking dependencies through a complex tree of transformations. DCP recursively expands each nonlinear atom into its graph implementation. For instance, $\|x\|_2$ becomes a new variable $t$ with constraint $(t,x) \in \mathcal{Q}^{n+1}$ where $\mathcal{Q}^{n+1}$ is the second-order cone. While this preserves convexity, it creates intricate dependencies between parameters and problem data. Even a simple term like $\theta x$ becomes problematic - DCP treats $\theta$ as a constant, obscuring how the solution changes with $\theta$ and requiring backpropagation through the full canonicalization procedure.

The grammar of DPP introduced in \citet{differentiableconvexoptimizationlayers} is a subset of DCP that ensures the canonicalization map $C$ and retriever $R$ are affine. This enables representing these maps as sparse matrices that only need to be computed once. DPP achieves this by treating parameters as affine objects rather than constants and restricting parameter operations to preserve affine structure. Under DPP, products require one term to be parameter-free, ensuring problem data depends affinely on parameters.

The following theorem formalizes this sparse matrix representation:

\begin{theorem}[Canonicalizer Representation]
The canonicalizer map $C$ for a disciplined parametrized program can be represented by a sparse matrix $Q \in \mathbb{R}^{n \times p+1}$ and sparse tensor $R \in \mathbb{R}^{m \times n+1 \times p+1}$, where $m$ is the constraint dimension. Let $\tilde{\theta} \in \mathbb{R}^{p+1}$ denote the concatenation of $\theta$ and 1. Then the cone program data is given by:
\[
c = Q\tilde{\theta}, \quad [A \quad b] = \sum_{i=1}^{p+1} R_{[:,:,i]}\tilde{\theta}_i
\]
\end{theorem}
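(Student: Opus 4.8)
The plan is to reduce the entire statement to a single structural claim: under the DPP rules, the cone program data $(A,b,c)$ produced by the canonicalizer $C$ depends \emph{affinely} on the parameter vector $\theta$. Once that affineness is in hand, the sparse matrix $Q$ and sparse tensor $R$ are nothing more than the coordinate representations of these affine maps, and the two displayed formulas follow by definition. So I would organize the proof into two parts: (i) proving that each entry of $c$ and of $[A \;\; b]$ is an affine function of $\theta$, which is the substantive step, and (ii) reading off $Q$ and $R$, which is bookkeeping.

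For part (i) I would argue by structural induction over the DPP expression tree, maintaining a joint invariant: after canonicalization every (sub)expression is affine in the optimization variables (including any auxiliary variables introduced by the graph implementations of nonlinear atoms), and each coefficient multiplying a variable, together with the constant term, is an affine function of $\theta$. The base cases are the leaves. A variable contributes coefficient $1$ and constant $0$, both trivially affine (indeed constant) in $\theta$; a parameter is by definition parameter-affine, hence affine in $\theta$; and a numerical constant is parameter-free, hence constant in $\theta$. The inductive step for affine atoms (sums and fixed scalar multiples) is immediate, since sums and scalings of affine-in-$\theta$ coefficients remain affine in $\theta$.

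The crux, and the step I expect to be the main obstacle, is the product atom, which is exactly where DPP departs from DCP. The DPP rule classifies $xy$ as affine only when one factor is parameter-affine and the other is parameter-free. I would treat this admissible case directly: the parameter-free factor is an expression affine in the variables whose coefficients are constant in $\theta$, while the parameter-affine factor contains no variables and is affine in $\theta$. Their product therefore has coefficients of the form (constant in $\theta$) $\times$ (affine in $\theta$), which is again affine in $\theta$; because no product is ever permitted to have two parameter-dependent factors, bilinear or higher-order terms such as $\theta_i\theta_j$ or $\theta^2$ can never be generated, forcing the dependence on $\theta$ to stay of degree at most one. The remaining care is that the graph implementations of nonlinear atoms introduce new auxiliary variables constrained by further affine expressions and conic memberships; I would verify that these auxiliary variables enter with constant (parameter-free) coefficients, so that expanding them preserves the invariant and keeps the extended data $(A,b,c)$ affine in $\theta$.

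For part (ii), with affineness established, write $c = C_0\theta + c_0$ for some $C_0 \in \mathbb{R}^{n\times p}$ and $c_0 \in \mathbb{R}^n$, and set $Q = [\,C_0 \;\; c_0\,] \in \mathbb{R}^{n\times(p+1)}$, so that $c = Q\tilde\theta$ with $\tilde\theta = (\theta,1)$. Likewise every entry of the $m\times(n+1)$ matrix $[A \;\; b]$ is affine in $\theta$, say $[A\;\;b]_{jk} = \sum_{l=1}^{p} R_{jkl}\,\theta_l + R_{j,k,p+1}$; collecting these coefficients into the tensor $R \in \mathbb{R}^{m\times(n+1)\times(p+1)}$ yields exactly $[A\;\;b] = \sum_{i=1}^{p+1} R_{[:,:,i]}\,\tilde\theta_i$. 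Finally, sparsity is inherited from the computational tree: each atom acts on only a few variables and parameters, so all but a small number of entries of $C_0$, $c_0$, and $R$ vanish, giving the claimed sparse representations that can be formed once and reused in both the forward and backward passes.
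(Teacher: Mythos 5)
Your proof is correct and takes essentially the same route as the paper: a structural induction over the canonicalized affine expression tree whose crux is that the DPP product rule forces one factor to be parameter-free, so no bilinear terms $\theta_i\theta_j$ can arise and the cone program data stay affine in $\theta$, after which $Q$ and $R$ are just the coordinate representations. The paper phrases the identical argument constructively, carrying explicit tensors $T_i, S_i$ at each node and combining them with an operation $\psi$ that keeps one factor in the constant parameter slice --- precisely your ``(constant in $\theta$) times (affine in $\theta$)'' observation written in coordinates.
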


\begin{proof}
We construct $Q$ and $R$ via reduction on the affine expression trees representing the canonicalized problem. Consider a root node $f$ with arguments $g_1,\ldots,g_n$. We obtain tensors $T_1,\ldots,T_n$ representing $f$'s linear action on each argument, then recursively process each subtree $g_i$ to obtain tensors $S_1,\ldots,S_n$. The DPP rules ensure that for each $i$, either $(T_i)_{j,k,l}=0$ for $l\neq p+1$ or $(S_i)_{j,k,l}=0$ for $l\neq p+1$.

Define operation $\psi(T_i,S_i)$ as:
\[
\psi(T_i,S_i) = \begin{cases}
   \sum_{l=1}^{p+1} (T_i)_{[:,:,p+1]}(S_i)_{[:,:,l]} & \text{ if } (T_i)_{j,k,l}=0 \text{ for } l\neq p+1 \\
   \sum_{l=1}^{p+1} (T_i)_{[:,:,l]}(S_i)_{[:,:,p+1]} & \text{ otherwise}
\end{cases}
\]

The tree rooted at $f$ evaluates to $S_0 = \sum_{i=1}^n \psi(T_i,S_i)$. The recursion bottoms out at leaf nodes:
\begin{itemize}
    \item Variable leaf $x \in \mathbb{R}^d$ yields $T \in \mathbb{R}^{d\times n+1 \times 1}$ with $T_{i,j,1} = 1$ if $i$ maps to $j$ in the variable vector
    \item Parameter leaf $p \in \mathbb{R}^d$ yields $T \in \mathbb{R}^{d\times 1 \times p+1}$ with $T_{i,1,j} = 1$ if $i$ maps to $j$ in the parameter vector  
    \item Constant leaf $c \in \mathbb{R}^d$ yields $T \in \mathbb{R}^{d\times 1 \times 1}$ with $T_{i,1,1} = c_i$
\end{itemize}
This construction yields the desired sparse representations $Q$ and $R$.
\end{proof}

This representation fundamentally changes how we handle derivatives. Rather than backpropagating through graph implementations and constraint reformulations, we compute derivatives using sparse matrix operations. Given parameters $\theta$, we obtain cone program data via $c = Q\tilde{\theta}$ and $[A \quad b] = \sum_{i=1}^{p+1} R_{[:,:,i]}\tilde{\theta}_i$, apply the cone solver for $\tilde{x}^\star$, and recover $x^\star$ through the affine retriever. For backpropagation, $D^T C(\theta)$ and $D^T R(\tilde{x}^\star)$ are simply transposes of our sparse matrices. The only substantial computation is differentiating through the cone solver $s$, which is accomplished using the method in \citet{conedifferentiation} (namely, by implicitly differentiating the KKT optimality conditions). This approach provides analytical derivatives while avoiding repeated canonicalization.

Note that the QP layer given in \citet{optnet} is a specific case of the convex optimization layers in \citet{differentiableconvexoptimizationlayers} and can be represented in DPP.

\section{Implementation}

The differentiable optimization layers in \citet{differentiableconvexoptimizationlayers} are easily implemented using the accompanying python library \texttt{cvxpylayers}, which support both PyTorch and TensorFlow. In particular, the authors show that the OptNet QP Layer 

\[
\begin{aligned}
\text{minimize} \quad & \frac{1}{2} z^\top Q x + q^\top z \\
\text{subject to} \quad & Az = b, \\
                        & Gz \leq h,
\end{aligned}
\]

where \( z \in \mathbb{R}^n \) is the optimization variable, and the problem data are \( Q \in \mathbb{R}^{n \times n} \succeq 0\) , \( q \in \mathbb{R}^n \), \( A \in \mathbb{R}^{m \times n} \), \( b \in \mathbb{R}^m \), \( G \in \mathbb{R}^{p \times n} \), and \( h \in \mathbb{R}^p \). We can implement this with:

\begin{lstlisting}[language=Python]
Q_sqrt = cp.Parameter((n, n))
q = cp.Parameter(n)
A = cp.Parameter((m, n))
b = cp.Parameter(m)
G = cp.Parameter((p, n))
h = cp.Parameter(p)
z = cp.Variable(n)
obj = cp.Minimize(0.5*cp.sum_squares(Q_sqrt*z) + q.T @ z)
cons = [A @ z == b, G @ z <= h]
prob = cp.Problem(obj, cons)
layer = CvxpyLayer(prob, parameters=[Q_sqrt, q, A, b, G, h], 
                   variables=[z])
\end{lstlisting}

The efficiency is comparable to OptNet's original implementation but the implementation is far simpler. The original paper \citet{differentiableconvexoptimizationlayers} further demonstrates how simple layers (e.g. ReLU, Softmax) can be easily framed as optimization problems and implemented, and a full set of examples can be found at the GitHub. repository.\footnote{https://github.com/cvxgrp/cvxpylayers}

\section{Applications}

We examine several key domains where convex optimization layers provide novel capabilities beyond traditional neural architectures. We focus particularly on applications that exploit the layer's ability to enforce hard constraints while maintaining end-to-end differentiability.

\subsection{Structured Prediction}

A fundamental application of optimization layers is enforcing complex structural dependencies in prediction tasks. The seminal work of \citet{optnet} demonstrated this through the task of learning to solve Sudoku puzzles from input-output pairs alone. The architecture learns to represent the game's logical constraints through the optimization layer's constraint matrices $A$ and $G$, enabling strict enforcement of rules during inference without explicit encoding.

More formally, consider a structured prediction task where outputs $y$ must satisfy some (potentially unknown) constraint set $\mathcal{C}$. Traditional approaches often rely on relaxations or penalties to approximately enforce constraints. In contrast, an optimization layer can represent this directly as:

\begin{equation}
\begin{aligned}
\text{minimize}_y \quad & f(y; x, \theta) + \frac{1}{2}y^TQy \\
\text{subject to} \quad & A(x,\theta)y = b(x,\theta) \\
& G(x,\theta)y \leq h(x,\theta)
\end{aligned}
\end{equation}

where $f$ captures the prediction objective and the constraints parametrized by $\theta$ implicitly learn $\mathcal{C}$ through training data. This formulation ensures predictions strictly satisfy the learned constraints while remaining differentiable.

\subsection{Signal Processing}

One natural application that \citet{optnet} touches on is signal processing, where optimization is already commonly used. Consider the task of signal denoising - removing noise from a corrupted signal to recover the original clean version. Traditional approaches like total variation (TV) denoising formulate this as a convex optimization problem:

\begin{equation}
\text{minimize}_z \quad \frac{1}{2}\|y - z\|_2^2 + \lambda\|Dz\|_1
\end{equation}

where $y$ is the noisy signal, $z$ is the denoised output, $D$ is a differencing operator, and $\lambda$ controls the strength of the smoothing. While effective, this requires manual tuning of $\lambda$ and uses a fixed differencing operator $D$.

By embedding this optimization within a neural network, we can instead learn both the regularization strength $\lambda$ and the structure of the operator $D$ from data. Given pairs of clean and noisy signals, the network can be trained end-to-end to minimize reconstruction error. This allows the model to automatically adapt to the statistics of the noise and signal.

\subsection{Linear Model Sensitivity to Adversarial Attacks}

\citet{differentiableconvexoptimizationlayers} further show that the differentiable structure of optimization layers provides a powerful framework for analyzing how machine learning models behave under adversarial modifications to training data. The general idea is to model an adversary as someone who can deliberately perturb the training data to degrade the model's performance on a separate test set. This can be formulated as a bilevel optimization problem, where the outer problem identifies the most damaging perturbations, and the inner problem represents the process of retraining the model on the altered data. Mathematically, this is expressed as:

\begin{equation}
\begin{aligned}
\text{minimize}_{\delta} \quad & L_{\text{test}}(\theta^*(\mathbf{x} + \delta)) \\
\text{subject to} \quad & \|\delta\|_\infty \leq \epsilon, \\
\text{where} \quad & \theta^*(\mathbf{x}) = \argmin_\theta \frac{1}{N}\sum_{i=1}^N \ell(\theta; x_i, y_i) + r(\theta).
\end{aligned}
\end{equation}

Here, the inner optimization reflects the usual model training process, where the parameters \(\theta\) are learned to minimize a convex objective, such as a regularized loss function, on the (potentially perturbed) training data. The outer optimization represents the adversary’s goal of finding the perturbation \(\delta\), constrained by an \(\ell_\infty\)-norm bound \(\epsilon\), that maximizes the test loss \(L_{\text{test}}\). This framework can simulate real-world adversarial scenarios like data poisoning attacks.

To better understand this, think of the model as being retrained on slightly "corrupted" versions of the training data. The adversary's goal is to subtly shift the training points—within a small allowable range—so that the model performs poorly on a separate test set. For instance, a logistic regression model might learn a decision boundary based on its training data. By tweaking certain training points in the direction of the gradient of the test loss, the adversary can "nudge" this boundary, leading to poorer classification accuracy on the test data.

The adversarial perturbations are computed using the gradients of the test loss with respect to the training points. For each point, the adversary applies an update of the form:
\[
x_i \leftarrow x_i + \epsilon \, \text{sign}(\nabla_{x_i} L_{\text{test}}(\theta^*)),
\]
where \(\nabla_{x_i} L_{\text{test}}(\theta^*)\) indicates the direction in which the test loss increases the most with respect to \(x_i\). The adversary moves each training point slightly in this direction to maximize the negative impact on the model's performance.

This bilevel structure is differentiable because we can compute gradients through both the inner training process and the outer optimization. Differentiable optimization layers make this possible by treating the solution to the inner problem (training) as a function of the input data, enabling efficient gradient computation. With this setup, one can precisely study how small data modifications propagate through the training process to affect the model's generalization performance, providing insights into both the vulnerabilities of the model and potential strategies for defense.

\subsection{Future Directions}

Several promising theoretical and applied directions remain to be explored:

\textbf{Robust Machine Learning and AI:}
Differentiable convex optimization layers can enhance the robustness of AI models by integrating adversarial defense mechanisms and uncertainty modeling directly into training. This enables better generalization to unseen data, resilience against data poisoning, and adaptability to real-world conditions where data is imperfect or dynamic.

\textbf{Autonomous Systems and Control Optimization:}
These layers are well-suited for optimizing control policies in robotics, self-driving vehicles, and other autonomous systems. They can solve trajectory planning and real-time decision-making problems under physical, safety, and operational constraints, ensuring efficiency and reliability in dynamic environments.

\textbf{Dynamic Resource Management:}
From energy systems like smart grids to telecommunications and logistics, differentiable convex layers can optimize the allocation of limited resources in real time. Their ability to handle constraints and dynamically adapt to changing conditions makes them ideal for applications requiring efficient resource scheduling or balancing.

\textbf{Complex Decision-Making in Applied Domains:}
Fields such as finance, healthcare, and physics-informed modeling can leverage these layers for solving constrained optimization problems within larger machine learning systems. Applications include portfolio management, personalized treatment plans, and systems obeying physical laws, where decision-making involves balancing competing objectives under strict constraints.

The key advantage in these domains is the layer's ability to maintain feasibility with respect to domain constraints while enabling gradient-based learning of problem structure. This offers a powerful framework for combining domain knowledge with data-driven approaches in a mathematically rigorous way.

\section{Limitations and Future Work}

We discuss the limitations of each framework and suggest possibilities for future work. 

\subsection{Limitations of OptNet}

While the OptNet framework exhibits several strengths, its practicality is limited by several factors. The first limitation is the cubic computational complexity associated with solving optimization problems exactly. This arises from the need to factorize the Karush-Kuhn-Tucker (KKT) system, which dominates the cost of solving the QPs. In practice, this makes the OptNet layer unsuitable for problems with high-dimensional decision variables or constraints. Specifically, the authors of \citet{optnet} note that the framework is generally practical for problems with fewer than 1000 hidden variables in the optimization layer, with substantially smaller limits required for applications demanding real-time computation. This stands in contrast to the quadratic complexity of standard feedforward neural network layers, such as fully connected or convolutional layers, highlighting a key bottleneck for scaling OptNet layers. Moreover, the original framework did not invoke structures in the data matrices such as sparsity, which could possibly increase efficiency. 

Another key problem is that tuning the parameters in an OptNet layer is highly nontrivial, primarily due to the invariances embedded in the parameter space. As an example, consider the equality constraint \(A x = b\), where \(A \in \mathbb{R}^{m \times n}\) and \(b \in \mathbb{R}^m\). If one row of \(A\), say the \(i\)-th row \(a_i^\top\), is scaled by a constant \(c > 0\), and simultaneously, the corresponding \(i\)-th entry in \(b\), \(b_i\), is scaled by the same constant \(c\), the constraint remains unchanged. Mathematically, the constraint 
\[
a_i^\top x = b_i
\]
transforms to 
\[
(c a_i^\top) x = (c b_i),
\]
which is equivalent because the scaling cancels out when checking feasibility. While this invariance does not affect the feasible set or optimal solution, it introduces challenges during training. Gradient-based optimization may attempt to update \(A\) and \(b\) along such invariant directions, leading to flat regions in the loss landscape where changes to \(A\) and \(b\) do not result in any change to the optimization layer’s output. This creates difficulty for gradient descent, as it struggles to identify informative updates, slowing convergence or requiring more aggressive tuning of hyperparameters such as learning rates. These challenges are compounded when multiple invariances interact, which can produce a degenerate parameter space that complicates training and necessitates careful initialization and optimization strategies to ensure stable learning. 

The final limitation is that OptNet is geared towards a particular class of optimization problems (QPs), and moreover requires users to explicitly transform their problems into the appropriate canonical form. This is addressed in \citet{differentiableconvexoptimizationlayers}.

\subsection{Limitations of Differentiable Convex Optimization Layers (Agrawal et al.)}

A fundamental limitation is that the approach hinges on solving a conic problem in each forward pass, and then differentiating through its solution map, incurring a computational cost that often scales at least cubicly in the problem dimension (e.g., factorizing KKT systems is typically $\mathcal{O}(n^3)$). This complexity can become impractical for large-scale instances, limiting the layer’s utility in real-time or high-throughput settings.

Moreover, the method only applies to disciplined parametrized programs, which excludes a range of nonconvex or even certain convex formulations not easily expressed within their grammar. This modeling restriction may force significant and sometimes unnatural problem reformulations. Another issue arises at points of non-uniqueness or degeneracy in the solution: the solution map is not guaranteed to be differentiable, so the backward pass may produce heuristics rather than exact gradients. Finally, while the authors’ abstraction streamlines integration into standard deep learning frameworks, it still depends on robust conic solvers, which can fail, struggle with numerical ill-conditioning, or return only approximate solutions

\subsection{Future Directions}

A key direction is to develop solver routines that exploit problem structure—such as sparsity patterns of the problem data to drive down the cubic complexity currently incurred when factorizing KKT systems. Such improvements would directly benefit both general ASA-form layers and specialized frameworks like OptNet’s QP layers. Another immediate need is to extend the DPP grammar to handle a richer class of convex problems and certain structured nonconvex ones, reducing the overhead of manual, fragile re-formulations. On the theoretical side, ensuring stable and unique solutions—via mild regularization or strategic perturbations—would guarantee that gradients are well-defined and numerically stable, even at degenerate points. Finally, tighter coupling with advanced conic and QP solvers, including better warm-starting strategies, could improve solver reliability and runtime performance, making these approaches scalable to larger, more complex tasks.

\section{Conclusion}

This survey has examined the theoretical foundations and practical implementations of differentiable optimization layers in neural architectures. Starting from quadratic programming layers, we traced the evolution to general convex optimization layers through key theoretical advances in differentiating through optimization problems. The development progressed from argmin differentiation to the more general cone program approach, enabling efficient handling of broader constraint classes.

Our analysis has revealed both significant progress and remaining challenges. While robust mathematical frameworks now exist for incorporating hard constraints in end-to-end trainable architectures, the cubic computational complexity of core operations remains a key bottleneck for scaling to high-dimensional problems. The parameter space exhibits problematic invariances that complicate training, and extensions beyond convex settings remain largely unexplored.

Looking forward, several promising directions emerge: more efficient solvers exploiting problem structure, extensions of the DPP grammar to broader problem classes, and deeper theoretical understanding of optimization-neural network interactions. As the field matures, we expect optimization layers to become an increasingly essential tool in deep learning, particularly for applications demanding hard constraints and structured predictions.
\newpage




\newpage

\end{document}